\documentclass[letterpaper,11pt]{article}
\usepackage[utf8]{inputenc}
\usepackage{times}  
\usepackage{helvet} 
\usepackage{courier}  
\usepackage[hyphens]{url}  
\usepackage{graphicx} 
\usepackage[ bottom=1.25in, top=1.25in, left=1.25in, right=1.25in]{geometry}
\urlstyle{rm} 
  
\usepackage{natbib} 
\usepackage{caption} 
\usepackage{authblk}
\usepackage[dvipsnames]{xcolor}
\usepackage[colorlinks = true,
            linkcolor = blue,
            urlcolor  = blue,
            citecolor = LimeGreen,
            anchorcolor = blue]{hyperref}
% \frenchspacing  
% \setlength{\pdfpagewidth}{8.5in} % DO NOT CHANGE THIS
% \setlength{\pdfpageheight}{11in} % DO NOT CHANGE THIS
%
% These are recommended to typeset algorithms but not required. See the subsubsection on algorithms. Remove them if you don't have algorithms in your paper.
\usepackage{algorithm}
\usepackage{algorithmic}
\newcommand*\samethanks[1][\value{footnote}]{\footnotemark[#1]}
%
%
% Keep the \pdfinfo as shown here. There's no need
% for you to add the /Title and /Author tags.
\title{Optimizing Group-Fair Plackett-Luce Ranking Models for Relevance and Ex-Post Fairness}
\date{}
\author[$1$]{Sruthi Gorantla}
\author[$2$]{Eshaan Bhansali\thanks{Work partially done while the author was a research intern at Microsoft Research, Bengaluru, India.}}
\author[$3$]{Amit Deshpande\thanks{Equal Contribution}}
\author[$1$]{Anand Louis\samethanks}

\affil[$1$]{\small{Indian Institute of Science, Bengaluru, India. \texttt{\{gorantlas, anandl\}@iisc.ac.in}}}
\affil[$2$]{\small{University of California, Berkeley, USA. \texttt{eshaanb@live.com}}}
\affil[$3$]{\small{Microsoft Research, Bengaluru, India. \texttt{amitdesh@microsoft.com}}}

% BEGIN AUTHOR COMMENTS
\usepackage{amsthm, amssymb, amsmath, cleveref}
\newtheorem{theorem}{Theorem}[section]
\newtheorem{lemma}[theorem]{Lemma}
\newtheorem{definition}[theorem]{Definition}

\usepackage{enumitem}

\renewcommand{\le}{\leqslant}

\renewcommand{\ge}{\geqslant}

\newcommand{\paren}[1]{\left(#1 \right )}

\newcommand{\sparen}[1]{\left[#1 \right ]}

\newcommand{\set}[1]{\left\{#1\right\}}

\newcommand{\abs}[1]{\left\lvert#1\right\rvert}

\newcommand{\R}{\mathbb R}

\newcommand{\E}{\mathbb E}

\newcommand{\cI}{\mathcal{I}}

\newcommand{\cR}{\mathcal{R}}

\usepackage{caption}
\usepackage{subcaption}
\usepackage{rotating}
\usepackage{makecell, cellspace, multirow}

\newcommand{\sfair}{\textsf{S}^{\text{fair}}_k(\cI)}
\newcommand{\gfair}{\textsf{G}^{\text{fair}}_k(\ell)}
\newcommand{\rfair}{\cR^{\text{fair}}}
\newcommand{\rhat}{\widehat{\cR}}
\newcommand{\pfair}{\pi^{\text{fair}}}
\newcommand{\pipl}{\pi^{\text{PL}}}

% END AUTHOR COMMENTS
\allowdisplaybreaks
\begin{document}

\maketitle
\begin{abstract}
In learning-to-rank (LTR), optimizing only the relevance (or the expected ranking utility) can cause representational harm to certain categories of items. Moreover, if there is implicit bias in the relevance scores, LTR models may fail to optimize for true relevance.
Previous works have proposed efficient algorithms to train stochastic ranking models that achieve fairness of exposure to the groups ex-ante (or, in expectation), which may not guarantee representation fairness to the groups ex-post, that is, after realizing a ranking from the stochastic ranking model.
Typically, ex-post fairness is achieved by post-processing, but previous work does not train stochastic ranking models that are aware of this post-processing.

In this paper, we propose a novel objective that maximizes expected relevance only over those rankings that satisfy given representation constraints to ensure ex-post fairness.
Building upon recent work on an efficient sampler for ex-post group-fair rankings, we propose a group-fair Plackett-Luce model and show that it can be efficiently optimized for our objective in the LTR framework. 

Experiments on three real-world datasets show that our group-fair algorithm guarantees fairness alongside usually having better relevance compared to the LTR baselines. In addition, our algorithm also achieves better relevance than post-processing baselines, which also ensures ex-post fairness. Further, when implicit bias is injected into the training data, our algorithm typically outperforms existing LTR baselines in relevance.
\end{abstract}

\section{Introduction}
% \paragraph{Fair LTR.} 
Rankings of people, places, news, and products have critical real-world applications that influence our worldview.
Ranking systems that optimize for relevance alone can amplify societal biases learned from their training data and reinforce certain stereotypes \cite{castillo2019fairness,zehlike2022fairness-part1,zehlike2023fairness-part2}. 
Thus, the field of fairness in learning-to-rank (LTR) has emerged as a response to these concerns, aiming to develop methodologies that ensure equitable and unbiased ranking outcomes.   

Stochastic ranking models have gained popularity in LTR \cite{cao2007learning,xia2008listwise,oosterhuis2018differentiable}, primarily due to off-the-shelf gradient-based methods that can be used to optimize these models efficiently.  
Further, they provide fairness guarantees that deterministic rankings for LTR cannot, e.g., ensuring that multiple items or groups have an equal (or some guaranteed minimum) probability of appearing at the top.

There are two types of fairness guarantees one could ask for in a stochastic ranking: \textit{ex-ante} and \textit{ex-post}.
Ex-ante fairness asks for satisfying fairness in expectation, i.e., before the stochastic ranking model realizes a ranking. 
In contrast, ex-post fairness, often referred to as outcome fairness, requires fairness of the actual ranking after one is generated by the stochastic ranking model\footnote{The choice between ex-post and ex-ante fairness depends on the context, available data, and additional ethical considerations.}.
To the best of our knowledge, recent works can guarantee ex-post fairness only by deterministic or randomized post-processing of the stochastic ranking model. But, in-processing for ex-post fairness has not been studied before this work.

We consider the well-known Plackett-Luce (PL) model as our stochastic ranking model.
PL model has been used in many fields, such as statistics \cite{plackett1975analysis,gormley2009grade}, psychology \cite{luce1959individual}, social choice theory \cite{soufiani2012random}, econometrics \cite{beggs1981assessing}, amongst others. Recent work has increased the popularity, scope, and efficiency of the PL model in LTR \cite{singh2019policy,diaz2020evaluating,oosterhuis2022learning}. 
It is also shown to be robust \cite{bruch2020stochastic} and effective for exploration in online LTR \cite{oosterhuis2018differentiable,oosterhuis2021unifying}. 

Faster and practical algorithms for novel and unbiased estimators of the gradient of the expected ranking utility--for the PL model--have been proposed recently
\cite{oosterhuis2021computationally,oosterhuis2022learning}.
These algorithms can efficiently optimize PL models for not just relevance (e.g., discounted cumulative gain) but also certain fairness notions that can be expressed as an expectation over the stochastic rankings (e.g., fair exposure).
Due to the inherent randomization in the PL model, ex-post fairness guarantees are more challenging to incorporate in the training process such that the resultant model can be optimized efficiently.

\subsection{Motivating Example for Ex-Post Fairness}
We start by demonstrating the importance of ex-post fairness in real-world ranking systems.
Consider a job recommendation platform such as \textit{LinkedIn Talent Search}\footnote{\url{https://business.linkedin.com/talent-solutions}}, where a stochastic ranking algorithm determines the order in which potential interview candidates from different demographic groups are recommended to recruiters.
Let us say there are candidates from two groups -- $G_1$, a majority group with high merit scores, and $G_2$, a minority group (usually underprivileged) whose merit scores are underestimated due to biases present in the training data used for LTR. 
These biases may originate from historical imbalances, social prejudices, or systemic inequalities in the data.

The stochastic ranking model must output the top-$10$ candidates every time a recruiter queries for a list of suitable candidates.
Consider a particular stochastic ranking that (1) chooses a group $G_1$ or $G_2$ with probability $0.5$ each, and (2) shows the top $10$ candidates from the group chosen in Step 1. 
This ensures \textit{equal representation} of both the groups ex-ante because there will be $5$ candidates in the top $10$ from each group, in expectation.
However, none of the rankings output by the stochastic ranking satisfies equal representation ex-post.
Such rankings may not be aligned with the ethical and diversity hiring policies of the recruiters (or companies).

% our contributions
\subsection{Our Contributions}
The main contribution of our work is a novel objective that maximizes relevance for a Group-Fair-PL model, where the relevance (or the expected ranking utility) is taken over only those rankings that satisfy given representation constraints for certain sensitive categories or groups of items. We show that a recent post-processing sampler for ex-post group-fair rankings \cite{gorantla2022sampling} combined with recent ideas to optimize the group-wise PL model \cite{oosterhuis2021computationally,oosterhuis2022learning} can be used to optimize this model efficiently. As a result, we get the best of both worlds: the efficiency of optimization in a fairness-aware in-processing objective and the ex-post fairness guarantees of post-processing methods.

Our experiments on three real-world datasets, HMDA, German Credit, and MovieLens, show that our model guarantees ex-post fairness and achieves higher relevance compared to the baselines. Implicit bias in training data can often negatively affect ranking models optimized for relevance \cite{celis2020interventions}. When implicit bias is injected into the training data as a stress test or audit for fair ranking algorithms, our algorithm outperforms existing baselines in fairness and relevance.

The rest of the paper is organized as follows: In Section \ref{sec:related}, we discuss closely related work in fair ranking to point out the significance and novelty of our results. Section \ref{sec:group-fair-plackett-luce} defines our novel relevance objective with ex-post fairness guarantees. In \Cref{sec:alg}, we show how to optimize the Group-Fair-PL model for our objective. 
Section \ref{sec:experiments} contains an experimental validation of our relevance and fairness guarantees.

\section{Related Work} \label{sec:related}
Stochastic ranking models have been widely studied in LTR, as they can be differentiable, and thus one can compute the gradient of a ranking utility to be optimized (e.g., discounted cumulative gain). In particular, the PL ranking model has been a popular model in recent work for optimizing relevance and fairness \cite{oosterhuis2021computationally,oosterhuis2022learning, singh2019policy}.
Recent work has proposed efficient and practical algorithms, namely, PL-Rank and its variants, for optimizing PL ranking models using estimates of the gradient \cite{oosterhuis2021computationally}. In addition to optimizing ranking utility, the PL-Rank algorithm also optimizes \textit{fairness of exposure} -- an ex-ante fairness metric \cite{singh2019policy}. \citet{yadav2021policy} also optimize a PL ranking model for both utility and fairness of exposure in the presence of position bias, where items that are ranked higher receive more positive relevance feedback. Similar to these works, we too study the PL ranking model for LTR. However, we propose a variant that incorporates ex-post fairness rather than just ex-ante fairness. 

Broadly, the fair ranking algorithms can be divided into two groups: \textit{post-processing} and \textit{in-processing}. 
Post-processing algorithms process the output of a given ranking model to incorporate group-fairness guarantees about sufficient representation of every group (especially, underprivileged demographic groups) in the top positions or top prefixes \cite{celis2018ranking,geyik2019fairness,asudeh2019designing}. 
As a result, the underlying ranking model may not be optimized in anticipation of the post-processing. 
In-processing algorithms, on the other hand, incorporate fairness controls to modify the objective in learning-to-rank \cite{singh2018fairness,singh2019policy,oosterhuis2021computationally}. 
As a consequence, previous work on post-processing algorithms in fair ranking can provide ex-post (actual) guarantees on the group-wise representation in the top ranks \cite{celis2018ranking,geyik2019fairness}, whereas in-processing algorithms can only provide ex-ante (expected) guarantees on group-wise exposure \cite{singh2018fairness} or amortized individual fairness \cite{biega2018equity}. The major drawback of the existing LTR algorithms is that none of them optimize relevance while ensuring that every output ranking satisfies group-wise representation guarantees in the top ranks. Our work aims to address this gap.

% Currently, many methods for ensuring group-fairness in ranking are post-processing methods, such as the algorithms by Celis et al. \cite{celis2018ranking}, and Singh and Joachims \cite{singh2018fairness}. These algorithms require inputs consisting of relevance scores for each item to be ranked, and they post-process the output ranking to satisfy given group-fairness constraints. The algorithm by Celis et al. \cite{celis2018ranking} is deterministic, as it outputs the group-fair ranking that maximizes utility based on the given scores subject to group-fairness constraints about group-wise representation in the top ranks or top prefixes. Their algorithm crucially requires the scores and assumes the output of the underlying ranker before post-processing to be consistent with the score-based ranking. Therefore, it cannot be used as a post-processing methods for a stochastic ranking model such as the Plackett-Luce model whose output may not always be consistent with the underlying scores.

Recently \citet{gorantla2022sampling} proposed a randomized post-processing algorithm that gives ex-post group-fairness guarantees. Their algorithm works in two steps, the first step generates a random group-fair allocation of the top-$k$ positions that satisfy given group-wise representation constraints, and the second step fills these positions consistent with the intra-group ranking within each group. Their algorithm only requires the ordinal ranking with each group as input, not the individual scores for items or across-group comparisons. Their motivation to study this was unreliable comparisons, implicit bias and incomplete information in ranking. However, their first step of sampling a group-fair allocation is closely related to our work and we apply it in the optimization of ex-post group-fair Plackett-Luce model. Therefore, it is also the most relevant post-processing baseline chosen in our experiments.

\section{Ex-Post Fairness in Ranking}
\label{sec:group-fair-plackett-luce}
\paragraph{Preliminaries.} Let $\cI$ denote the set of items (or documents). 
Let $\textsf{S}_k(\cI)$ be the set of all $k$ sized permutations of the items in $\cI$.
In the learning-to-rank setup,
for any query $q$, the goal is to output the top-$k$ ranking of relevant items. 
Let $R_{q,d}$ be an indicator random variable that takes the value of $1$ if item $d$ is relevant to $q$ and $0$ otherwise.
The probability of $d$ being relevant to $q$ is represented as $\rho_d := P(R_{q,d} = 1)$.
Let $\sigma \in \textsf{S}_k(\cI)$ represent a ranking and let $\sigma(i)$ represent the item in rank $i$.
We use $\sigma(i:i')$ for any $1\le i<i'\le k$ to represent the set of items in ranks $i$ to $i'$ included in ranking $\sigma$, that is, $\sigma(i:i') := \set{\sigma(i), \sigma(i+1), \ldots, \sigma(i')}$.
Note that $\sigma(1:k)$ represents the items in the ranking as a \textit{set}, whereas $\sigma$ itself is an ordered representation of this set of items.
In the following, we drop $q$ from the notation since, in the rest of the paper, we will be working with a fixed query $q$.

Previous works have considered stochastic ranking models since they offer equity in attention distribution across items \cite{singh2019policy}. They are preferred over deterministic rankings for diversity \cite{rankingDiversity} and robustness \cite{bruch2020stochastic}.
We also study stochastic ranking models. 
We use $\pi$ to denote a stochastic ranking model (or policy) and $\Pi$ to denote the set of all stochastic ranking policies. 
The expected relevance metric for $\pi \in \Pi$ is defined as follows,
\begin{equation}
\label{eq:relevance}
    \cR(\pi) := \sum_{\sigma \in \textsf{S}_k(\cI)}\pi[\sigma]\sum_{i \in [k]} \theta_i \rho_{\sigma(i)},
\end{equation}
where $\theta_i \in \R_{\ge 0}$ are the position discounts associated with each rank $i \in [k]$ and $\pi[\sigma]$ represents the probability of sampling $\sigma$ according to $\pi$.

\paragraph{Policy Gradients for Placket-Luce.}
We use $\pipl$ to represent the Plackett-Luce (PL) model.
This is a popular stochastic ranking model that, given a prediction model $m$ that predicts log scores $m(d)$ for each item $d$, samples a ranking from the distribution defined based on the individual scores of the items as follows,
\begin{align*}
\forall \sigma \in \textsf{S}_k(\cI),\qquad\pipl[\sigma] := \prod_{i=1}^k \frac{e^{m(\sigma(i))}}{\sum\limits_{d\in \cI \setminus \sigma(1:i-1)}e^{m(d)}}.
\end{align*}
\citet{singh2019policy} have proposed using policy gradients to train a  PL ranking model to maximize expected relevance. 
They utilize the famous log trick from the REINFORCE algorithm \cite{Williams2004SimpleSG} to compute the gradients of the expected relevance metric and use stochastic gradient descent to update the parameters of the PL model. 
\citet{oosterhuis2022learning} has developed a computationally efficient way to compute the gradients of the expected relevance metric for the PL model.
As a result, PL models can now be trained efficiently to maximize the expected relevance.

\paragraph{Group-Fair Ranking.}
Suppose that the set of items $\cI$ can be partitioned into $\ell$ groups $\cI_1, \cI_2, \ldots, \cI_\ell$ based on the group membership (based on age, race, gender, etc.).
For any integer $t$, we use $[t]$ to denote the set $\{1,2,\ldots, t\}$.
We consider the representation-based group fairness constraints in the top-$k$ rankings, where, for each group $j \in [\ell]$, we are given a lower bound $L_j$ and an upper bound $U_j$ on the number of items that can appear in the top-$k$ ranking from that group.
Let $\sfair$ represent all possible group-fair top-$k$ rankings. That is,
\begin{align*}
\sfair := \left\{\sigma \in \textsf{S}_k(\cI): L_j \le |\sigma(1:k) \cap \cI_j| \le U_j, \forall j \in [\ell] \right\}.
\end{align*}
Let $\textsf{G}_k(\ell)$ represent the set of all \textit{group assignment}s of the top-$k$ rankings for $\ell$ groups. That is, for any element $\gamma \in \textsf{G}_k(\ell)$, $\gamma(i)$ represents the group of the item in rank $i$.
\[
\textsf{G}_k(\ell) := \set{\gamma \in [\ell]^k},~~\text{or equivalently,}~~\textsf{G}_k(\ell) :=[\ell]^k.
\]
Let $g : \cI \rightarrow [\ell]$ be the group membership function for the items. 
For an item $d$, $g(d)$ represents its group membership.
We use $g(\sigma)$ to represent the vector of group memberships of the items in the ranking $\sigma$.
Note that for any $\sigma$, $g(\sigma) \in \textsf{G}_k(\ell)$.
We can then define $\gfair$ as the set of group assignments that satisfy the group fairness constraints.
\[
\gfair := \set{g(\sigma) \in [\ell]^k: \sigma \in \sfair }.
\]
Then for any $\sigma \in \sfair$, $g(\sigma)\in \gfair$.
There is a many-to-one correspondence between $\sfair$ and $\gfair$.
% $\pmb{R}$

\begin{definition}[Ex-Post Fair Policy]
    A policy $\pi \in \Pi$ is ex-post fair if each ranking $\sigma$ sampled by the policy $\pi$ satisfies representation-based fairness constraints. That is,
    \[
    \forall \sigma\sim \pi,~~g(\sigma)\not\in \gfair~\implies~\pi[\sigma] = 0.
    \]
\end{definition} 

\paragraph{Limitations of Plackett-Luce.}
There have been two significant contributions toward fair ranking with PL models. We list them and point out their limitations below.
\begin{enumerate}
    \item \textbf{In-processing.} \citet{asudeh2019designing} and \citet{oosterhuis2021computationally} have proposed policy gradients-based optimization for expected relevance and equity of expected exposure of groups of items for PL models.
    The major drawback of these methods is that fairness is measured in expectation. Therefore, the trained PL model may not satisfy ex-post fairness.
    \item \textbf{Post-processing.} \citet{singh2018fairness, celis2018ranking, gorantla2022sampling, geyik2019fairness} and many other previous works have proposed algorithms to post-process the scores or the ranking output by any LTR model (or specifically PL) to satisfy fairness. Ex-post fairness is satisfied in this case, but the trained LTR model is unaware of the post-processing that is going to be applied on the scores. Hence, it may end up learning a bad solution.
\end{enumerate}

We overcome these limitations by incorporating ex-post fairness during the training process of the PL-based LTR.
Towards this end, we propose to use a different objective function for the stochastic ranking models.

\paragraph{Proposed Optimization Objective.}
We ask for maximizing expected relevance over ex-post group-fair rankings. Then the fair expected relevance can be written as follows,
\begin{equation}
\label{eq:fair_relevance}
    \rfair(\pi) := \begin{cases}
        \sum\limits_{\sigma \in \sfair}\pi[\sigma]\sum\limits_{i \in [k]} \theta_i \rho_{\sigma(i)},&\text{if }\pi \text{ is ex-post fair}\\
        0 &\text{otherwise}.
    \end{cases}
\end{equation}
In general, the PL model may not satisfy ex-post fairness. Consider the case where the predicted scores of all the items by model $m$ are non-zero. Then every ranking in $\textsf{S}_k(\cI)$ is sampled with a non-zero probability in the PL model based on these scores.
Therefore, even if an optimal PL model that maximizes $\rfair$ is ex-post fair, the intermediate PL models during the training process may not be ex-post fair. Then $\rfair$ for intermediate PL models will be evaluated to $0$, resulting in all the gradients being set to $0$. Hence, we can not train the PL model with $\rfair$.
In fact, the only way to train a model for $\rfair$ is to make sure that the model always samples fair rankings.

\paragraph{Other approaches.} We could also optimize a different relevance metric $\rhat$ defined over $\sfair$, 
\[
\rhat(\pi) := \sum\limits_{\sigma \in \sfair}\pi[\sigma]\sum\limits_{i \in [k]} \theta_i \rho_{\sigma(i)}.
\]
For any ex-post fair policy $\pi$, $\rhat(\pi) = \rfair(\pi)$.
Moreover, if the fairness constraints are vacuous, that is, $L_j = 0$ and $U_j = k$, for all $j \in [\ell]$, then, 
$\rhat(\pi) = \rfair(\pi) = \cR(\pi)$.
Note that $\rhat$ does not strictly enforce ex-post fairness while training.
Hence PL model can be trained for optimizing $\rhat$.
One could use \textit{rejection sampling} to enforce ex-post fairness during and after training.
That is, to output a ranking from this model, we need to sample rankings from this model until we see a fair ranking.
However, in general, the probability of seeing a fair ranking may be very small.
For example, if the fairness constraints are such that $L_j = U_j$ for all but a constant number of groups in $[\ell]$, and the predicted scores of the items are such that from each group $j \in [\ell]$, $k$ items have a score of $1$ and others have score $0$, then the probability of seeing a fair ranking is $\frac{k^{c}}{k^{\ell}}$, where $c$ is a constant. This means that, in the PL model, one needs to sample $O(k^{\ell})$ many rankings in expectation before seeing a fair ranking\footnote{This follows from the fact that the expected value of a geometric random variable with parameter $p := \frac{k^{c}}{k^{\ell}}$ is $1/p$.}, which is computationally inefficient.
This also affects the training process since the estimate of the gradient only makes sense if we have enough samples that are fair rankings.

For these reasons, asking for stochastic ranking models that can be trained with $\rfair$ as an objective is well-motivated.
In the next section we describe our model that satisfies ex-post fairness, from which we can sample group-fair rankings efficiently.
As a result, we get an efficient algorithm to compute gradients of our proposed model for optimizing $\rfair$.
We can then use the stochastic gradient descent method to train our model.
\section{Group-Fair Placektt-Luce Model}
\label{sec:alg}
Let $\pfair$ represent the Group-Fair-PL model we propose.
In $\pfair$, we have a two-step process to sample ex-post group-fair rankings,
\begin{enumerate}
    \item Sample a top-$k$ group assignment $\gamma \in \gfair$.
    \item Sample a top-$k$ ranking $\sigma \in \sfair$ such that $g(\sigma) = \gamma$.
\end{enumerate}
Then the Group-Fair-PL model can be written as,
\begin{align}
    \pfair[\sigma] = \mu[g(\sigma)]\pfair[\sigma \mid g(\sigma)],
\end{align}
where $\mu[\cdot]$ is the distribution over $\{\gamma \in \gfair\}$, and $\pfair[\cdot \mid \gamma]$ is a conditional distribution over $\{\sigma \in \sfair : g(\sigma) = \gamma\}$.
It is clear that, to achieve ex-post fairness, we can only sample group-fair group assignments in Step 1.
For Step 2, we use PL model for items within the group for the ranks assigned to that group according to $\gamma$.
Therefore, in the Group-Fair-PL model, any group-fair ranking $\sigma \in \sfair$ is sampled with probability,
\begin{align}
    \pfair[\sigma] := \mu[g(\sigma)]\prod_{i=1}^k \frac{e^{m(\sigma(i))}}{\sum\limits_{d\in \underbrace{\cI_{g(\sigma(i))}}_{\text{items from group}} \setminus \sigma(1:i-1)}e^{m(d)}},
    \label{eq:group-fair-pl}
\end{align}
and any non-group-fair ranking is sampled with probability $0$.
Therefore, $\rfair$ defined in (\ref{eq:fair_relevance}) is always evaluated in the \textbf{if} case for our Group-Fair-PL model.
Let $\sigma_j$ be the (sub-) ranking of items from group $j$ in $\sigma$.
We use $\pipl_j$ to represent the group-wise PL model for group $j$.
Note that for any $j, j' \in [\ell]$, $\sigma_j$ and $\sigma_{j'}$ are sampled independently from $\pipl_j$ and $\pipl_{j'}$ respectively.
Given a group assignment $\gamma \in \gfair$, let $\psi_j(\gamma) \subseteq [k]$ be the subset of the ranks assigned to group $j$ according to $\gamma$.
Since $\cI_1, \ldots, \cI_{\ell}$ form a partition of $\cI$, $\psi_1(\gamma), \ldots, \psi_\ell(\gamma)$ form a partition of $[k]$.
Therefore, \Cref{eq:group-fair-pl} can be written as,
\begin{align}
    \pfair[\sigma] :&= \mu[g(\sigma)]\prod_{j=1}^\ell \prod_{i\in \psi_j(g(\sigma))} \frac{e^{m(\sigma(i))}}{\sum\limits_{d\in \cI_{g(\sigma(i))} \setminus \sigma(1:i-1)}e^{m(d)}}\notag\\
    &= \mu[g(\sigma)]\prod_{j=1}^\ell \pipl_j[\sigma_j].
    \label{eq:group-fair-pl-2}
\end{align}
\begin{lemma}
    $\pfair$ is a valid probability distribution over $\sfair$.
\end{lemma}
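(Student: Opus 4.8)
The plan is to verify the two defining properties of a probability distribution, namely nonnegativity and summation to one, with the latter being where essentially all the work lies. Nonnegativity is immediate: by \Cref{eq:group-fair-pl-2} each $\pfair[\sigma]$ is the product of the weight $\mu[g(\sigma)]$, which is nonnegative since $\mu$ is a distribution over $\gfair$, with the group-wise PL probabilities $\pipl_j[\sigma_j]$, each of which is a ratio of strictly positive exponentials and hence nonnegative. Since every non-group-fair ranking is assigned probability $0$ by construction, it then suffices to show $\sum_{\sigma \in \sfair} \pfair[\sigma] = 1$ to conclude that $\pfair$ is a valid distribution supported on $\sfair$.

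First I would partition the sum over $\sfair$ according to the group assignment. Because every $\sigma \in \sfair$ has a well-defined $g(\sigma) \in \gfair$, I can write $\sum_{\sigma \in \sfair} \pfair[\sigma] = \sum_{\gamma \in \gfair} \sum_{\sigma : g(\sigma) = \gamma} \pfair[\sigma]$. Since $\mu[g(\sigma)] = \mu[\gamma]$ is constant on each block of this partition, I would pull it out, reducing the claim to showing that for every fixed $\gamma$ the inner sum $\sum_{\sigma : g(\sigma) = \gamma} \prod_{j=1}^\ell \pipl_j[\sigma_j]$ equals $1$. The conclusion would then follow from $\sum_{\gamma \in \gfair} \mu[\gamma] = 1$, which holds because $\mu$ is a distribution over $\gfair$.

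The key step is to factor this inner sum across groups. Fixing $\gamma$ fixes the partition $\psi_1(\gamma), \ldots, \psi_\ell(\gamma)$ of the ranks $[k]$, and specifying a ranking $\sigma$ with $g(\sigma) = \gamma$ is equivalent to independently choosing, for each group $j$, an ordered placement $\sigma_j$ of $\abs{\psi_j(\gamma)}$ items of $\cI_j$ into the ranks $\psi_j(\gamma)$. This yields a bijection between $\set{\sigma : g(\sigma) = \gamma}$ and the product set of tuples $(\sigma_1, \ldots, \sigma_\ell)$, so the sum of the product factorizes as $\prod_{j=1}^\ell \bigl( \sum_{\sigma_j} \pipl_j[\sigma_j] \bigr)$. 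I would then invoke the standard normalization property of the Plackett-Luce model: for each group $j$ the sequential sampling-without-replacement definition makes $\pipl_j$ a genuine distribution over all length-$\abs{\psi_j(\gamma)}$ rankings drawn from $\cI_j$, so each factor equals $1$ and the product is $1$.

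The main obstacle I anticipate is carefully justifying this factorization/independence step: establishing the bijection between full rankings consistent with $\gamma$ and tuples of group-wise sub-rankings, and confirming that the index sets over which the sums run match up exactly, so that the sum of a product legitimately becomes a product of sums. The PL normalization itself is routine, following from a telescoping argument on the sequential definition, but one must take care that the relevant sum is over rankings of exactly the right length $\abs{\psi_j(\gamma)}$ dictated by $\gamma$, rather than over all permutations of $\cI_j$.
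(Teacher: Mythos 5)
Your proposal is correct and follows essentially the same route as the paper's proof: both decompose the sum over $\sfair$ by the group assignment $\gamma$, pull out $\mu[\gamma]$, factorize the inner sum over rankings consistent with $\gamma$ into a product of sums over group-wise sub-rankings $\sigma_j \in \textsf{S}_{\abs{\psi_j(\gamma)}}(\cI_j)$, and conclude via the normalization of each group-wise PL model and of $\mu$. The factorization step you flag as the main obstacle is exactly the step the paper carries out by iteratively distributing the nested sums, so no gap remains.
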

\begin{proof}
    It is clear that $\pfair[\sigma] \ge 0$ for each $\sigma \in \sfair$.
    Moreover, non-group-fair rankings are sampled from $\mu$ with probability $0$. 
    Therefore, $\pfair[\sigma] = 0$, for every $\sigma \not\in \sfair$. Further,
    \begin{align*}
         \sum_{\sigma \in \sfair} &\pfair[\sigma] \\
         &= \sum_{\sigma \in \sfair} \mu[g(\sigma)]\prod_{j \in [\ell]} \pipl_j[\sigma_j \mid g(\sigma)]\\
         &= \sum_{\gamma \in \gfair} \mu[\gamma]\sum_{\substack{\sigma \in \textsf{S}_k(\cI)\\\text{ s.t. }g(\sigma) = \gamma}} \prod_{j \in [\ell]} \pipl_j[\sigma_j\mid \gamma]\\
         &= \sum_{\gamma \in \gfair} \mu[\gamma]\paren{\sum_{\sigma_1 \in \textsf{S}_{\abs{\psi_1(\gamma)}}(\cI_1)} \paren{\sum_{\sigma_2 \in \textsf{S}_{\abs{\psi_2(\gamma)}}(\cI_2)} \cdots \paren{\sum_{\sigma_\ell \in \textsf{S}_{\abs{\psi_\ell(\gamma)}}(\cI_\ell)} \prod_{j \in [\ell]} \pipl_j[\sigma_j\mid \gamma]}}}\\
         &= \sum_{\gamma \in \gfair} \mu[\gamma]\left(\sum_{\sigma_1 \in \textsf{S}_{\abs{\psi_1(\gamma)}}(\cI_1)} \pipl_1[\sigma_1\mid \gamma] \left(\sum_{\sigma_2 \in \textsf{S}_{\abs{\psi_2(\gamma)}}(\cI_2)}\pipl_2[\sigma_2\mid \gamma]\right.\right.\cdots\\
         &\qquad\qquad\qquad\qquad\qquad\qquad\qquad\qquad\qquad\qquad\left.\left.\cdots\paren{\sum_{\sigma_{\ell-1} \in \textsf{S}_{\abs{\psi_\ell(\gamma)}}(\cI_\ell)} \pipl_\ell[\sigma_\ell\mid \gamma]}\right)\right)\\
         &= \sum_{\gamma \in \gfair} \mu[\gamma]\paren{\sum_{\sigma_1 \in \textsf{S}_{\abs{\psi_1(\gamma)}}(\cI_1)} \pipl_1[\sigma_1\mid \gamma] }\paren{\sum_{\sigma_2 \in \textsf{S}_{\abs{\psi_2(\gamma)}}(\cI_2)}\pipl_2[\sigma_2\mid \gamma]}\cdots \\&\qquad\qquad\qquad\qquad\qquad\qquad\qquad\qquad\qquad\qquad\qquad\cdots\paren{\sum_{\sigma_{\ell-1} \in \textsf{S}_{\abs{\psi_\ell(\gamma)}}(\cI_\ell)} \pipl_\ell[\sigma_\ell\mid \gamma]}\\
         &= \sum_{\gamma \in \gfair} \mu[\gamma]
         \prod_{j \in [\ell]}
         \sum_{\sigma_j \in \textsf{S}_{\abs{\psi_j(\gamma)}}(\cI_j)}  \pipl_j[\sigma_j\mid \gamma]\\
         &= \sum_{\gamma \in \gfair} \mu[\gamma]
         \prod_{j \in [\ell]}
         1\\
         &= 1.
    \end{align*}
\end{proof}

It remains to understand what should be the distribution over the group assignments $\mu[\cdot]$.
First, we need that $\mu$ is \textit{efficiently samplable}. 
It means that even if there are exponentially-many group assignments that are fair, we need to able to sample one group assignment from $\mu$ in polynomial time.
Moreover, the gradients of this policy should be \textit{efficiently computable}, meaning that the gradient of the expected relevance metric can be computed in polynomial time.

\begin{algorithm}[tb]
\caption{Group-Fair-PL}
\label{alg:algorithm}
\textbf{Input:} items: $\mathcal{I}$, relevance scores: $\rho$, ranking metric weights: $\theta$, prediction model: $m$, number of samples: $M$, groups: $\cI_1, \cI_2, \ldots, \cI_\ell$, bounds: $L, U$, ranking model: $\pfair$\\
\textbf{Output}: Gradients $\frac{\delta}{\delta m}\rfair(\pfair)$.
\begin{algorithmic}[1] %[1] enables line numbers
\STATE Sample $M$ group assignments $\gamma^{(1)}, \gamma^{(2)}, \ldots, \gamma^{(M)}$ from \cite{gorantla2022sampling} with fairness constraints $L, U$.

\FOR{each group $j = 1, 2, \ldots, \ell$}
\FOR{$t = 1,2,\ldots, M$}
\STATE Sample $\sigma^{(t)}_{j}$ from $\pipl_j$ for ranks $\psi_j(\gamma)$.
\ENDFOR
\STATE For all $d \in \cI$, estimate the gradient $\frac{\delta}{\delta m(d)}\cR_j(\pipl_j)$ with $M$ samples $\sigma_j^{(1)}, \sigma_j^{(2)}, \ldots, \sigma_j^{(M)}$ using PL-Rank-3 from \cite{oosterhuis2022learning} and  call it $\zeta_j(d)$.
\ENDFOR
\STATE \textbf{return} gradient estimated for each $d \in \cI$ as follows,
\[\frac{\delta}{\delta m(d)}\rfair(\pfair)=
\sum_{j \in [\ell]}\zeta_j(d).
\]
\end{algorithmic}
\end{algorithm}

The distribution from \cite{gorantla2022sampling} to sample the fair group assignment $\gamma$ is efficiently samplable and the gradients in this model are efficiently computable. In fact, the distribution does not depend on the predicted scores of the items. Hence, the computation of the gradients boils down to computation of the gradients for each of the group-wise PL models, which we can do efficiently owing to the PL-Rank-3 algorithm in \cite{oosterhuis2022learning}. 

The distribution $\mu$ from \cite{gorantla2022sampling} samples a fair group assignment $\gamma$ in two steps:
\begin{enumerate}
    \item First, it samples a tuple from the set $\{(x_1, x_2, \ldots, x_\ell) \in [k]^\ell : L_j \le x_j \le U_j\land x_1+x_2+\cdots+x_\ell = k\}$, uniformly at random. In this tuple, $x_j$ gives the number of items from group $j$ to be sampled in the top-$k$ ranking.
    \item Then it samples a group assignment $\gamma = (\gamma_1, \gamma_2, \ldots, \gamma_k)$ to be a uniform random permutation of the vector $(\underbrace{1,1,\ldots,1}_{x_1\text{ times}}, \underbrace{2,2,\ldots,2}_{x_2\text{ times}}, \ldots, \underbrace{\ell,\ell,\ldots,\ell}_{x_\ell\text{ times}})$.
\end{enumerate}
Below, we re-state their theorem about the time taken to sample a fair group assignment from this distribution.
\begin{theorem}[Theorem 4.1 in \cite{gorantla2022sampling}]
\label{thm:gdl}
    There is a dynamic programming-based algorithm that samples a group assignment $\gamma$ in time $O(k^2\ell)$.
\end{theorem}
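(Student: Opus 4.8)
The plan is to realize the two-step description of $\mu$ as two algorithmic phases and to bound each within the stated budget, with the dominant cost coming from the first (tuple-sampling) phase. For that phase I would set up a counting dynamic program. For $j \in \{0,1,\dots,\ell\}$ and $s \in \{0,1,\dots,k\}$, let $N(j,s)$ denote the number of partial tuples $(x_1,\dots,x_j)$ satisfying $L_i \le x_i \le U_i$ for all $i \le j$ and $x_1 + \cdots + x_j = s$. The base case is $N(0,0) = 1$ and $N(0,s) = 0$ for $s > 0$, and the recurrence is
\[
N(j,s) = \sum_{v = L_j}^{\min(U_j,\,s)} N(j-1,\,s-v).
\]
Filling this table bottom-up yields in particular $N(\ell,k)$, the total number of feasible tuples; we treat arithmetic on these (possibly large) integers as unit cost, as is standard.

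The second part of the plan is to sample a feasible tuple uniformly by descending through the completed table. Starting from residual sum $r = k$, for $j$ running from $\ell$ down to $1$ I would draw $x_j = v$ with probability $N(j-1,\,r-v)/N(j,\,r)$ over the admissible range $v \in \{L_j,\dots,\min(U_j,r)\}$, then update $r \leftarrow r - v$. Correctness follows from a telescoping-product argument: writing $r_j := x_1 + \cdots + x_j$ for the partial sums (so $r_0 = 0$ and $r_\ell = k$), any fixed feasible tuple is produced with probability
\[
\prob[\,x_1,\dots,x_\ell\,] \;=\; \prod_{j=1}^{\ell} \frac{N(j-1,\,r_{j-1})}{N(j,\,r_j)} \;=\; \frac{N(0,0)}{N(\ell,k)} \;=\; \frac{1}{N(\ell,k)},
\]
where the denominator of the factor for index $j$ cancels the numerator of the factor for index $j+1$, leaving every feasible tuple equiprobable, exactly as Step 1 requires.

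For the final phase I would turn the sampled counts $(x_1,\dots,x_\ell)$ into a group assignment by writing down the length-$k$ vector with $x_j$ copies of label $j$ and applying a Fisher--Yates shuffle; since identical labels are interchangeable, this yields a uniformly random multiset permutation, matching Step 2. On the running time: the recurrence touches $O(\ell k)$ table cells, each summing over $O(k)$ terms, so the table fill costs $O(\ell k^2)$; the descent samples $\ell$ coordinates, each scanning $O(k)$ candidate values against precomputed entries, for $O(\ell k)$; and the shuffle is $O(k)$. The $O(\ell k^2)$ table fill dominates, giving the claimed bound.

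I expect the one place needing care rather than heavy computation is justifying that the descending conditional probabilities are well defined --- the conditioning counts $N(j,r)$ are positive along any path reachable when $N(\ell,k) > 0$ --- and that the telescoping cancellation is exact, which together certify uniformity of the sampled tuple. The inner-sum cost in the recurrence could be shaved to $O(1)$ per cell with prefix sums over $s$, but this refinement is not needed to meet the $O(k^2\ell)$ target.
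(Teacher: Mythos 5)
The paper itself contains no proof of this statement---it is imported verbatim as Theorem 4.1 of \citet{gorantla2022sampling}, with only the two-step description of $\mu$ preceding it---and your reconstruction is a correct, complete realization of exactly that description via the standard counting-DP approach the theorem's name announces. The prefix-count recurrence for $N(j,s)$, the backward descent whose telescoping product yields uniformity $1/N(\ell,k)$ over feasible tuples (with well-definedness handled correctly, since any value $v$ with $N(j-1,r-v)=0$ is drawn with probability zero, so all denominators along a realized path are positive whenever $N(\ell,k)>0$), the Fisher--Yates shuffle for the uniform multiset permutation of Step~2, and the accounting $O(\ell k^2)$ for the table fill plus $O(\ell k)$ for the descent plus $O(k)$ for the shuffle all check out and give precisely the claimed $O(k^2\ell)$ bound.
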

Therefore, this distribution is efficiently samplable. Moreover, this distribution also satisfies additional desirable properties, which we will discuss further in \Cref{sec:experiments}. 
Our following result then shows that using the distribution given by \cite{gorantla2022sampling} for $\mu$ in our Group-Fair-PL model gives us an efficient algorithm to compute gradient of $\rfair$ with respect to the predicted scores $m$. 
\begin{theorem}
\label{thm:main}
\Cref{alg:algorithm} estimates the gradient of the relevance metric $\rfair$ in the Group-Fair-PL model in time $O\paren{Mk^2\ell + M\paren{\abs{\cI}+k\ell\log{\abs{\cI}}}}$.
\end{theorem}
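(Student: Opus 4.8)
The plan is to first show that the gradient of $\rfair$ with respect to each score $m(d)$ reduces to a single within-group gradient that PL-Rank-3 can estimate, and then to bound the running time by summing the cost of the group-assignment sampler with the per-group applications of PL-Rank-3. I would start from the factorization in \Cref{eq:group-fair-pl-2}, $\pfair[\sigma]=\mu[g(\sigma)]\prod_{j\in[\ell]}\pipl_j[\sigma_j]$. Because $\psi_1(\gamma),\ldots,\psi_\ell(\gamma)$ partition $[k]$, the position-weighted relevance splits additively as $\sum_{i\in[k]}\theta_i\rho_{\sigma(i)}=\sum_{j\in[\ell]}\sum_{i\in\psi_j(g(\sigma))}\theta_i\rho_{\sigma(i)}$, where the $j$-th inner sum depends only on the sub-ranking $\sigma_j$. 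Substituting into \Cref{eq:fair_relevance} and factoring the sum over sub-rankings exactly as in the validity proof for $\pfair$—so that, in the group-$j$ relevance term, every other factor $\sum_{\sigma_{j'}}\pipl_{j'}[\sigma_{j'}]$ with $j'\neq j$ collapses to $1$—I would obtain
\begin{equation*}
\rfair(\pfair)=\sum_{\gamma\in\gfair}\mu[\gamma]\sum_{j\in[\ell]}\cR_j\paren{\pipl_j\mid\gamma},\qquad\cR_j\paren{\pipl_j\mid\gamma}:=\sum_{\sigma_j}\pipl_j[\sigma_j]\sum_{i\in\psi_j(\gamma)}\theta_i\rho_{\sigma(i)}.
\end{equation*}

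The crucial observation for differentiability is that $\mu$ does not depend on the predicted scores $m$, so the gradient passes straight through $\mu$. Writing $j^\star=g(d)$ for the group of item $d$, only the group-$j^\star$ term can depend on $m(d)$ (the score $m(d)$ never appears in $\pipl_j$ for $j\neq j^\star$, since $d\notin\cI_j$), giving
\begin{equation*}
\frac{\delta}{\delta m(d)}\rfair(\pfair)=\sum_{\gamma\in\gfair}\mu[\gamma]\,\frac{\delta}{\delta m(d)}\cR_{j^\star}\paren{\pipl_{j^\star}\mid\gamma}=\E_{\gamma\sim\mu}\sparen{\frac{\delta}{\delta m(d)}\cR_{j^\star}\paren{\pipl_{j^\star}\mid\gamma}}.
\end{equation*}
For a fixed $\gamma$, $\cR_{j^\star}\paren{\pipl_{j^\star}\mid\gamma}$ is precisely the expected relevance of the group-wise PL model $\pipl_{j^\star}$ ranking the items of $\cI_{j^\star}$ into the positions $\psi_{j^\star}(\gamma)$ with discounts $\set{\theta_i}$, so its derivative is exactly the quantity PL-Rank-3 estimates. \Cref{alg:algorithm} realizes the outer expectation by drawing $\gamma^{(1)},\ldots,\gamma^{(M)}$ from $\mu$ and, for each $t$, sampling $\sigma_j^{(t)}$ on the ranks $\psi_j(\gamma^{(t)})$; feeding these $M$ samples to PL-Rank-3 yields an unbiased $\zeta_j(d)$ for $\E_{\gamma}[\frac{\delta}{\delta m(d)}\cR_j(\pipl_j\mid\gamma)]$. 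Since $\zeta_j(d)=0$ whenever $d\notin\cI_j$, the returned $\sum_{j\in[\ell]}\zeta_j(d)$ equals $\zeta_{j^\star}(d)$, matching the displayed gradient.

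For the running time I would account for the two phases separately. Sampling the $M$ group assignments costs $O(Mk^2\ell)$ by \Cref{thm:gdl}. For each group $j$, sampling the $M$ sub-rankings and running PL-Rank-3 \cite{oosterhuis2022learning} to estimate $\zeta_j$ together cost $O\paren{M\paren{\abs{\cI_j}+k\log\abs{\cI_j}}}$, where the linear term is the per-sample gradient aggregation and the $k\log\abs{\cI_j}$ term is the Gumbel-based sampling of each length-at-most-$k$ sub-ranking. Summing over groups and using that $\cI_1,\ldots,\cI_\ell$ partition $\cI$, so $\sum_{j}\abs{\cI_j}=\abs{\cI}$, together with the loose bound $\log\abs{\cI_j}\le\log\abs{\cI}$ across all $\ell$ summands, gives $\sum_{j\in[\ell]}O\paren{M\paren{\abs{\cI_j}+k\log\abs{\cI_j}}}=O\paren{M\abs{\cI}+Mk\ell\log\abs{\cI}}$. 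Adding the sampling phase yields the claimed $O\paren{Mk^2\ell+M\paren{\abs{\cI}+k\ell\log\abs{\cI}}}$; note the factor $\ell$ in the last term comes precisely from bounding each group's rank count by $k$ rather than exploiting $\sum_j\abs{\psi_j(\gamma)}=k$.

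The main obstacle I anticipate is the second paragraph: justifying that a single invocation of PL-Rank-3 per group correctly averages the gradient even though the rank set $\psi_j(\gamma^{(t)})$—and hence the effective discounts seen by group $j$—changes from sample to sample. The argument must verify that PL-Rank-3 forms its estimate as an average of per-sample contributions, each computed with that sample's own positions and discounts, so that the Monte Carlo average over the jointly drawn pairs $(\gamma^{(t)},\sigma_j^{(t)})$ remains unbiased for $\E_{\gamma}[\frac{\delta}{\delta m(d)}\cR_j(\pipl_j\mid\gamma)]$. The factorization $\pfair[\sigma]=\mu[g(\sigma)]\prod_j\pipl_j[\sigma_j]$ and the score-independence of $\mu$ are exactly what let this per-group, per-sample bookkeeping go through, and confirming that PL-Rank-3's internal accounting respects varying cutoffs is the one nonroutine check I would want to make explicit.
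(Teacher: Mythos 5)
Your proposal is correct and follows essentially the same route as the paper's proof: the same factorization of $\rfair(\pfair)$ into $\sum_{\gamma}\mu[\gamma]\sum_{j}\cR_j$, the same use of the score-independence of $\mu$ to push the gradient inside, and the same joint-sampling (one $\sigma_j$ per $\gamma^{(t)}$, i.e., the paper's ``$N=1$'' case) with identical time accounting $O(Mk^2\ell)$ plus $\sum_j O(M(\abs{\cI_j}+k\log\abs{\cI_j}))$. The one check you flag---that PL-Rank-3's estimate is a per-sample average that tolerates rank sets $\psi_j(\gamma^{(t)})$ and discounts varying across samples---is exactly the point the paper's proof addresses by substituting the PL-Rank-3 gradient expression and collapsing $\E_{\gamma\sim\mu}\E_{\sigma_j\mid\gamma}$ into a single expectation over $\pfair$, so your argument is complete once that observation is made explicit.
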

\begin{proof}[Proof of \Cref{thm:main}]
Note that given a group assignment $\gamma$, the probability of sampling an item $d$ at rank $i$ depends only on the items from group $\gamma(i)$ that appear in ranks $1$ to $i-1$.
In our Group-Fair PL model, only items from group $\gamma(i)$ are sampled in rank $i$.
Let $\psi_j(\gamma)$ represent the set of ranks assigned to group $j$ according to the group assignment $\gamma$, for each $j \in [\ell]$.
Then,
\begin{align*}
    \rfair(\pfair) &= \sum_{\sigma \in \sfair}\pfair[\sigma]\paren{\sum_{i \in [k]} \theta_i \rho_{\sigma(i)}}&\\
    &=  \sum_{\sigma \in \sfair}\sum_{\gamma \in \gfair}\pfair[\gamma, \sigma]\paren{\sum_{i \in [k]} \theta_i \rho_{\sigma(i)}}&\\
    &=  \sum_{\sigma \in \sfair}\sum_{\gamma \in \gfair}\mu[\gamma]\cdot\pfair[\sigma\mid \gamma]  \paren{\sum_{i \in [k]} \theta_i \rho_{\sigma(i)}}&\\
    &= \sum_{\gamma \in \gfair}\mu[\gamma] \sum_{\sigma \in \sfair}\pfair[\sigma\mid \gamma]\paren{\sum_{i \in [k]} \theta_i\rho_{\sigma(i)}}.
\end{align*}
\Cref{eq:group-fair-pl-2} gives us,
\begin{align*}
    \rfair(\pfair) &= \sum_{\gamma \in \gfair}\mu[\gamma]\paren{\sum_{\sigma \in \sfair}\paren{\prod_{j \in [\ell]}\pipl_j[\sigma_j \mid \gamma]} \paren{\sum_{i \in [k]]} \theta_i \rho_{\sigma(i)}}}.&
\end{align*}
Now since $\psi_1(\gamma), \psi_2(\gamma), \ldots, \psi_\ell(\gamma)$ form a partition of $[k]$ we can rearrange the terms to get the following,
\begin{align*}
    &\rfair(\pfair) \\
    &= \sum_{\gamma \in \gfair}\mu[\gamma]\paren{\sum_{\sigma \in \sfair}\paren{\prod_{j \in [\ell]}\pipl_j[\sigma_j \mid \gamma]}  \paren{\sum_{j \in [\ell]}\sum_{i \in \psi_j(\gamma)} \theta_i \rho_{\sigma(i)}}}\\
    &= \sum_{\gamma \in \gfair}\mu[\gamma]\left(\sum_{\sigma_1 \in \textsf{S}_{\abs{\psi_1(\gamma)}}(\cI_1)} \sum_{\sigma_2 \in \textsf{S}_{\abs{\psi_2(\gamma)}}(\cI_2)}\cdots \right.\\
    &\left.\qquad\qquad\qquad\cdots \sum_{\sigma_\ell \in \textsf{S}_{\abs{\psi_\ell(\gamma)}}(\cI_\ell)} \paren{\prod_{j \in [\ell]} \pipl_j[\sigma_j\mid \gamma]}\paren{\sum_{j \in [\ell]}\sum_{i \in \psi_j(\gamma)} \theta_i \rho_{\sigma(i)}}\right)\\
    &= \sum_{\gamma \in \gfair}\mu[\gamma]\left(\sum_{\sigma_1 \in \textsf{S}_{\abs{\psi_1(\gamma)}}(\cI_1)} \pipl_1[\sigma_1\mid \gamma]\sum_{\sigma_2 \in \textsf{S}_{\abs{\psi_2(\gamma)}}(\cI_2)} \pipl_2[\sigma_2\mid \gamma]\cdots\right.\\
    &\left.\qquad\qquad\qquad\qquad\qquad\cdots \sum_{\sigma_\ell \in \textsf{S}_{\abs{\psi_\ell(\gamma)}}(\cI_\ell)} \pipl_\ell[\sigma_\ell\mid \gamma]\paren{\sum_{j \in [\ell]}\sum_{i \in \psi_j(\gamma)} \theta_i \rho_{\sigma(i)}}\right).
\end{align*}
Now the last term can be written as,
\begin{align*}
\sum_{\sigma_\ell \in \textsf{S}_{\abs{\psi_\ell(\gamma)}}(\cI_\ell)} &\pipl_\ell[\sigma_\ell\mid \gamma]\paren{\sum_{j \in [\ell]}\sum_{i \in \psi_j(\gamma)} \theta_i \rho_{\sigma(i)}} \\
&= \sum_{\sigma_\ell \in \textsf{S}_{\abs{\psi_\ell(\gamma)}}(\cI_\ell)} \pipl_\ell[\sigma_\ell\mid \gamma]\paren{\sum_{i \in \psi_\ell(\gamma)} \theta_i \rho_{\sigma(i)}+\sum_{j \in [\ell-1]}\sum_{i \in \psi_j(\gamma)} \theta_i \rho_{\sigma(i)}} \\
&= \sum_{\sigma_\ell \in \textsf{S}_{\abs{\psi_\ell(\gamma)}}(\cI_\ell)} \pipl_\ell[\sigma_\ell\mid \gamma]\paren{\sum_{i \in \psi_\ell(\gamma)} \theta_i \rho_{\sigma(i)}}\\
&\qquad\qquad\qquad\qquad+\sum_{j \in [\ell-1]}\sum_{i \in \psi_j(\gamma)} \theta_i \rho_{\sigma(i)} \underbrace{\sum_{\sigma_\ell \in \textsf{S}_{\abs{\psi_\ell(\gamma)}}(\cI_\ell)} \pipl_\ell[\sigma_\ell\mid \gamma]}_{=1}\\
&= \sum_{\sigma_\ell \in \textsf{S}_{\abs{\psi_\ell(\gamma)}}(\cI_\ell)} \pipl_\ell[\sigma_\ell\mid \gamma]\paren{\sum_{i \in \psi_\ell(\gamma)} \theta_i \rho_{\sigma(i)}}+\sum_{j \in [\ell-1]}\sum_{i \in \psi_j(\gamma)} \theta_i \rho_{\sigma(i)}. 
\end{align*}
Taking the summation $\sum_{\sigma_{\ell-1} \in \textsf{S}_{\abs{\psi_{\ell-1}(\gamma)}}(\cI_{\ell-1})} \pipl_{\ell-1}[\sigma_{\ell-1}\mid \gamma]$ on both sides we get,
\begin{align*}
&\sum_{\sigma_{\ell-1} \in \textsf{S}_{\abs{\psi_{\ell-1}(\gamma)}}(\cI_{\ell-1})} \pipl_{\ell-1}[\sigma_{\ell-1}\mid \gamma]\sum_{\sigma_\ell \in \textsf{S}_{\abs{\psi_\ell(\gamma)}}(\cI_\ell)} \pipl_\ell[\sigma_\ell\mid \gamma]\paren{\sum_{j \in [\ell]}\sum_{i \in \psi_j(\gamma)} \theta_i \rho_{\sigma(i)}} \\
&= \sum_{\sigma_{\ell-1} \in \textsf{S}_{\abs{\psi_{\ell-1}(\gamma)}}(\cI_{\ell-1})} \pipl_{\ell-1}[\sigma_{\ell-1}\mid \gamma]\left(\sum_{\sigma_\ell \in \textsf{S}_{\abs{\psi_\ell(\gamma)}}(\cI_\ell)} \pipl_\ell[\sigma_\ell\mid \gamma]\paren{\sum_{i \in \psi_\ell(\gamma)} \theta_i \rho_{\sigma(i)}}\right.\\
&\left.\qquad\qquad\qquad\qquad\qquad\qquad\qquad+\sum_{j \in [\ell-1]}\sum_{i \in \psi_j(\gamma)} \theta_i \rho_{\sigma(i)}\right)\\
&= \sum_{\sigma_\ell \in \textsf{S}_{\abs{\psi_\ell(\gamma)}}(\cI_\ell)} \pipl_\ell[\sigma_\ell\mid \gamma]\paren{\sum_{i \in \psi_\ell(\gamma)} \theta_i \rho_{\sigma(i)}}\underbrace{\sum_{\sigma_{\ell-1} \in \textsf{S}_{\abs{\psi_{\ell-1}(\gamma)}}(\cI_{\ell-1})} \pipl_{\ell-1}[\sigma_{\ell-1}\mid \gamma]}_{=1}\\
&\qquad\qquad\qquad\qquad\qquad\qquad\qquad+\sum_{\sigma_{\ell-1} \in \textsf{S}_{\abs{\psi_{\ell-1}(\gamma)}}(\cI_{\ell-1})} \pipl_{\ell-1}[\sigma_{\ell-1}\mid \gamma]\sum_{j \in [\ell-1]}\sum_{i \in \psi_j(\gamma)} \theta_i \rho_{\sigma(i)}\\
&= \sum_{\sigma_\ell \in \textsf{S}_{\abs{\psi_\ell(\gamma)}}(\cI_\ell)} \pipl_\ell[\sigma_\ell\mid \gamma]\paren{\sum_{i \in \psi_\ell(\gamma)} \theta_i \rho_{\sigma(i)}}\\
&\qquad\qquad\qquad\qquad\qquad\qquad\qquad+\sum_{\sigma_{\ell-1} \in \textsf{S}_{\abs{\psi_{\ell-1}(\gamma)}}(\cI_{\ell-1})} \pipl_{\ell-1}[\sigma_{\ell-1}\mid \gamma]\paren{\sum_{i \in \psi_{\ell-1}(\gamma)} \theta_i \rho_{\sigma(i)}}\\
&\qquad\qquad\qquad\qquad\qquad\qquad\qquad+\sum_{j \in [\ell-2]}\sum_{i \in \psi_j(\gamma)} \theta_i \rho_{\sigma(i)}.
\end{align*}
Repeating the above until we reach $j \in [1]$ in the last term, we get,
\begin{align}
    &\rfair(\pfair)\notag \\
    &= \sum_{\gamma \in \gfair}\mu[\gamma]\left(\sum_{\sigma_1 \in \textsf{S}_{\abs{\psi_1(\gamma)}}(\cI_1)} \pipl_1[\sigma_1\mid \gamma]\paren{\sum_{i \in \psi_1(\gamma)} \theta_i \rho_{\sigma(i)}} +\cdots \right.\notag \\
    &\left.\qquad\qquad\qquad\qquad\qquad\qquad\qquad\qquad\cdots+\sum_{\sigma_\ell \in \textsf{S}_{\abs{\psi_\ell(\gamma)}(\cI_\ell)}} \pipl_\ell[\sigma_\ell\mid \gamma]\paren{\sum_{i \in \psi_\ell(\gamma)} \theta_i \rho_{\sigma(i)}}\right)\notag \\
    &= \sum_{\gamma \in \gfair}\mu[\gamma]\left(\sum_{j \in [\ell]}\sum_{\sigma_j \in \textsf{S}_{\abs{\psi_j(\gamma)}}(\cI_j)} \pipl_j[\sigma_j\mid \gamma]\paren{\sum_{i \in \psi_j(\gamma)} \theta_i \rho_{\sigma(i)}} \right)\notag \\
    &= \sum_{\gamma \in \gfair}\mu[\gamma]\paren{\sum_{j \in [\ell]}\cR_j(\pipl_j)},\label{eq:reward_sum}
\end{align}
where $R_j(\gamma)$ is the reward obtained from the group-wise Plackett-Luce model $\pipl_j$ for the ranks assigned to group $j$ according to the group assignment $\gamma$. That is,
\[
\cR_j(\pipl_j) := \sum_{\sigma_j \in \textsf{S}_{\abs{\psi_j(\gamma)}}(\cI_j)}\pipl_j[\sigma_j \mid \gamma] \paren{\sum_{i \in \psi_j(\gamma)}\theta_i \rho_{\sigma(i)}}.
\]
Now for a fixed an item $d$, the derivative with respect to the score of $d$ will be,
\begin{align*}
    \frac{\delta}{\delta m(d)} \rfair(\pfair) &= \frac{\delta}{\delta m(d)}\sum_{\gamma \in \gfair}\mu[\gamma]\Bigg(\sum_{j \in [\ell]}\cR_j(\pipl_j)\Bigg).
\end{align*}
Since in our group-fair PL model, the group assignment $\gamma$ is sampled independently of the score $m(d)$, we have
\begin{align}
\frac{\delta}{\delta m(d)}\rfair(\pfair) &= \sum_{\gamma \in \gfair}\mu[\gamma]\Bigg(\sum_{j \in [\ell]}\frac{\delta}{\delta m(d)} \cR_j(\pipl_j)\Bigg) = \E_{\gamma \sim \mu}\sparen{\sum_{j \in [\ell]}\frac{\delta}{\delta m(d)} \cR_j(\pipl_j)}.\label{eq:nsample}
\end{align}
\paragraph{Na\"ively applying PL-Rank-3.}
PL-Rank-3 with $N$ samples can estimation the gradient $\frac{\delta}{\delta m(d)} \cR_j(\pipl_j)$, for a fixed $\gamma$, in time $O\paren{N\paren{\abs{\cI_j} + k \log\abs{\cI_j}}}$ for group $j \in [\ell]$.
Let us say we take $M$ samples to estimate the outer expectation. 
From \Cref{thm:gdl} we have that the time taken to sample one group assignment $\gamma$ is $O\paren{k^2\ell}$.
Therefore, to sample $M$ group assignments, it takes time $O\paren{Mk^2\ell}$.
Then, the total time taken to compute $\frac{\delta}{\delta m(d)}\rfair(\pfair)$ will be
\begin{align}
    O\paren{M\paren{k^2\ell + \sum_{j \in [\ell]} N\paren{\abs{\cI_j} + k \log\abs{\cI_j}}}} = O\paren{Mk^2\ell + MN\paren{\abs{\cI}+k\ell\log{\abs{\cI}}}}.\label{eq:naive_time}
\end{align}
\paragraph{Correctness for $N=1$.}
% We first fix a group $j \in [\ell]$ and a group-wise ranking $\sigma_j$ sampled from $\pipl_j$.
% Recall that $\psi_j(\gamma)$ is the \textit{set} of ranks assigned to group $j$ according to $\gamma$.
% We then use $\widetilde{\psi}_j$ to represent the \textit{ordered tuple} of ranks assigned to group $j$ according to $\gamma$. 
% For example if $\gamma = (1,1,2,1,1,3,1,2,\ldots)$, then $\widetilde{\psi}_1 = (1,2,4,5,7,\ldots)$, $\widetilde{\psi}_2 = (3,8,\ldots)$, and so on. 
% Then we define $\theta^{(j)} = \theta(\widetilde{\psi}_j)$. 
% That is, the position discount of the ranks assigned to group $j$.
% We construct a ranking problem instance specific to the group, that is, let $\rho' = (\rho_{\gamma^{-1}(j)})$.
% In the following, we use $\sigma_j^{-1}(d)$ to denote the rank assigned to document $d \in \cI_j$ within the group $j$.
% That is, $\sigma_j^{-1}(d)$ is the rank assigned to document $d$. 
% For documents not in the top $k$ rankings of $\sigma$, we use $\sigma^{-1}(d) = k+1$.
Let $\text{rank}(\sigma, d)$ represent the rank assigned to item $d$ in $\sigma$. Then from PL-Rank-3 algorithm in \cite{oosterhuis2022learning} we know that 
\begin{align}
    \frac{\delta}{\delta m(d)} \cR_j(\pipl_j) = \E_{\sigma_j\mid \gamma}\left[PR^{(j)}_{\sigma, d} + e^{m(d)}\paren{\rho_d DR^{(j)}_{\sigma,d} - RI^{(j)}_{\sigma, d}}\right],\label{eq:group_grad}
\end{align}
where
\begin{align*}
    PR^{(j)}_{\sigma, i} =\sum_{i' = [i+1,k] \cap \psi_j(\gamma)}\theta_{i'}\rho_{\sigma(i')}
    ~~~~&\text{and}~~~~ PR^{(j)}_{\sigma, d} = PR^{(j)}_{\sigma,\text{rank}(\sigma, d)},\\
    RI^{(j)}_{\sigma, i} = \sum_{i' = [i+1,k] \cap \psi_j(\gamma)}\frac{PR^{(j)}_{\sigma, i}}{\sum_{d' \in \cI_j \setminus \sigma(1:i-1)}e^{m(d')}}~~~~&\text{and}~~~~ RI^{(j)}_{\sigma, d}= RI^{(j)}_{\sigma,\text{rank}(\sigma, d)},\\
    DR^{(j)}_{\sigma, i} = \sum_{i' = [i+1,k] \cap \psi_j(\gamma)}\frac{\theta_{\sigma, i}}{\sum_{d' \in \cI_j \setminus \sigma(1:i-1)}e^{m(d')}}~~~~&\text{and}~~~~ DR^{(j)}_{\sigma, d}= DR^{(j)}_{\sigma,\text{rank}(\sigma, d)}.\\
\end{align*}
Note that for a fixed ranking $\sigma$, PL-Rank-3 computes the term inside the expectation efficiently in time $O(\abs{\cI_j} + k \log \abs{\cI_j})$.
Hence, even if the position discount values vary between different samples, or if the length of the ranking $\abs{\psi_j(\gamma)}$ changes between different samples, we can still use PL-Rank-3 algorithm to compute the term inside the expectation for each sample independently and efficiently.
Therefore, substituting \Cref{eq:group_grad} in \Cref{eq:nsample} we get,
\begin{align*}
    \frac{\delta}{\delta m(d)}\rfair(\pfair) &= \E_{\gamma \sim \mu}\sparen{\sum_{j \in [\ell]}\frac{\delta}{\delta m(d)} \cR_j(\pipl_j)} \\
    &= \E_{\gamma \sim \mu}\sparen{\sum_{j \in [\ell]}\E_{\sigma_j\mid \gamma \sim \pipl_j}\left[PR^{(j)}_{\sigma, d} + e^{m(d)}\paren{\rho_d DR^{(j)}_{\sigma,d} - RI^{(j)}_{\sigma, d}}\right]}.
\end{align*}

By linearity of expectation,
\begin{align}
    \frac{\delta}{\delta m(d)}\rfair(\pfair) &= \sum_{j \in [\ell]}\E_{\gamma \sim \mu}\sparen{\E_{\sigma_j\mid \gamma\sim \pipl_j}\left[PR^{(j)}_{\sigma, d} + e^{m(d)}\paren{\rho_d DR^{(j)}_{\sigma,d} - RI^{(j)}_{\sigma, d}}\right]}\notag\\
    &= \sum_{j \in [\ell]}\E_{\gamma,\sigma_j\sim \pfair}\left[PR^{(j)}_{\sigma, d} + e^{m(d)}\paren{\rho_d DR^{(j)}_{\sigma,d} - RI^{(j)}_{\sigma, d}}\right].\label{eq:final_grad}
\end{align}
Hence, we can estimate each term in \Cref{eq:final_grad} by taking an empirical average of $M$ samples of each group-wise rankings. 
For this, we can take $M$ samples of $\gamma$ and $1$ sample each of $\sigma_j$.
From \cite{oosterhuis2022learning} we know that for group $j$ we can compute the corresponding term in the summation in time $O(|\cI_j| + k\log|\cI_j|)$, resulting in a total time complexity of $O(Mk^2\ell + M(|\cI| + k\ell\log|\cI|))$.
Note that this is same as replacing $N = 1$ in \Cref{eq:naive_time}.
\end{proof}
Note that PL-Rank-3 takes time $O\paren{ M\paren{\abs{\cI}+k\log{\abs{\cI}}}}$ to compute the gradients.

\section{Experiments}
\label{sec:experiments}

We conduct experiments on real-world datasets to evaluate our algorithm empirically.
First, we compare Group-Fair-PL against the unconstrained PL model and observe that Group-Fair-PL is competitive in optimizing relevance. Second,  we use bias-injected data to verify that Group-Fair-PL can mitigate bias by ensuring ex-post fairness, while achieving higher true utility than the PL model. Finally, we also compare our algorithm with post-processing baselines for relevance and ex-post fairness.

\paragraph{Metrics.} We use NDCG as the ranking utility metric, with position discounts $\theta_i = \frac{1}{\log_2(i+1)}$ for all $i \in [k]$ (first row in all the figures).
% To verify ex-post fairness, we look at the \textit{fraction of rankings} curves in \textit{each} of the top $k$ ranks
The second row in the figures shows the \textit{fraction of rankings} sampled from the stochastic ranking models, where an item from the minority group is placed at rank $i$ for each rank $i \in [k]$. The minority group is as mentioned in \Cref{tab:datasets}.
The lower and upper bound lines in the figures show $(p\pm \delta)k$, where $p$ is the proportion of the minority group in the dataset and $\delta$ is a small number (see \Cref{tab:datasets}).

\paragraph{Baselines.}
Apart from PL-Rank-3, we consider \textbf{PL-Rank-3 + GDL22} and  \textbf{PL-Rank-3 + GAK19} as baselines to compare our fair in-processing algorithm Group-Fair-PL with post-processing baselines by \cite{gorantla2022sampling} and \cite{geyik2019fairness}, respectively.
GAK19 is a fairness post-processing method that also simultaneously optimizes for relevance, unlike GDL23, which only optimizes for relevance within the groups but completely ignores the inter-group comparisons.
We also compare results with \textbf{PL-Rank-3 (true)} which is the PL model trained with PL-Rank-3 on the unbiased (or true) relevance scores.
For more details regarding the parameter choices, see \Cref{tab:datasets}.
% $278547$ &
\begin{table*}[t]
    \centering
    \caption{Parameters and results of the experiments on various datasets.}
    \label{tab:datasets}
    \setcellgapes{2.5pt}\makegapedcells
    \resizebox{\textwidth}{!}{
    \begin{tabular}{|l|c|c|c|c|c|c|c|c|c|c|c|c|}
        \hline
        \multicolumn{7}{|c|}{\textbf{Dataset} }&\multicolumn{6}{c|}{\textbf{Experiment}}\\
        \hline
        \textbf{Name} &  \textbf{\#queries} &\makecell[c]{\textbf{max} \\\textbf{\#items}\\\textbf{per}\\ 
        \textbf{query}} &\makecell{\textbf{Relevance}\\ \textbf{Labels}}& \makecell{\textbf{Sensitive}\\ \textbf{feature}} & \textbf{Groups} & \textbf{Minority} & \makecell[c]{$M$\\(Alg. 1)} &$k$& $\delta$ & \makecell[c]{\textbf{Avg. running} \\ \textbf{time (sec.)}\\(\textit{Group-Fair-PL})} & \makecell[c]{\textbf{Avg. running} \\ \textbf{time (sec.)}\\(\textit{PL-Rank-3})} & \textbf{Reference}\\
        \hline
        MovieLens &  $2290$ & $588$ &\makecell[c]{$1,2,3,$\\$4,5$} & Genre &\makecell[c]{Action($33\%$), Crime($12\%$),\\Romance($30\%$), \\Musical($9\%$), Sci-Fi($16\%$)}& Crime &10&10&0.02&4285&118&\Cref{fig:movielens}\\
        \hline
        German Credit &  $500$ &$25$ & $0, 1$ & Gender & Male($74\%$), Female($26\%$) & Female &50&20&0.05&3008&59&\Cref{fig:german}\\
        \hline
        HMDA (AK) & $75$ & $25$ & $0, 1$ & Gender & Male($71\%$), Female($29\%)$&Female &100&25&0.06&1528&50&\Cref{fig:hmda}\\
        \hline
        HMDA (CT)& $731$ & $100$ & $0, 1$ & Gender & Male($67\%$), Female($33\%$)&Female &10&25&0.06&7850&673&\Cref{fig:hmda_CT}\\
        \hline
    \end{tabular}}
    
\end{table*}
\begin{figure*}[h]
    \centering
    \includegraphics[width=0.9\textwidth]{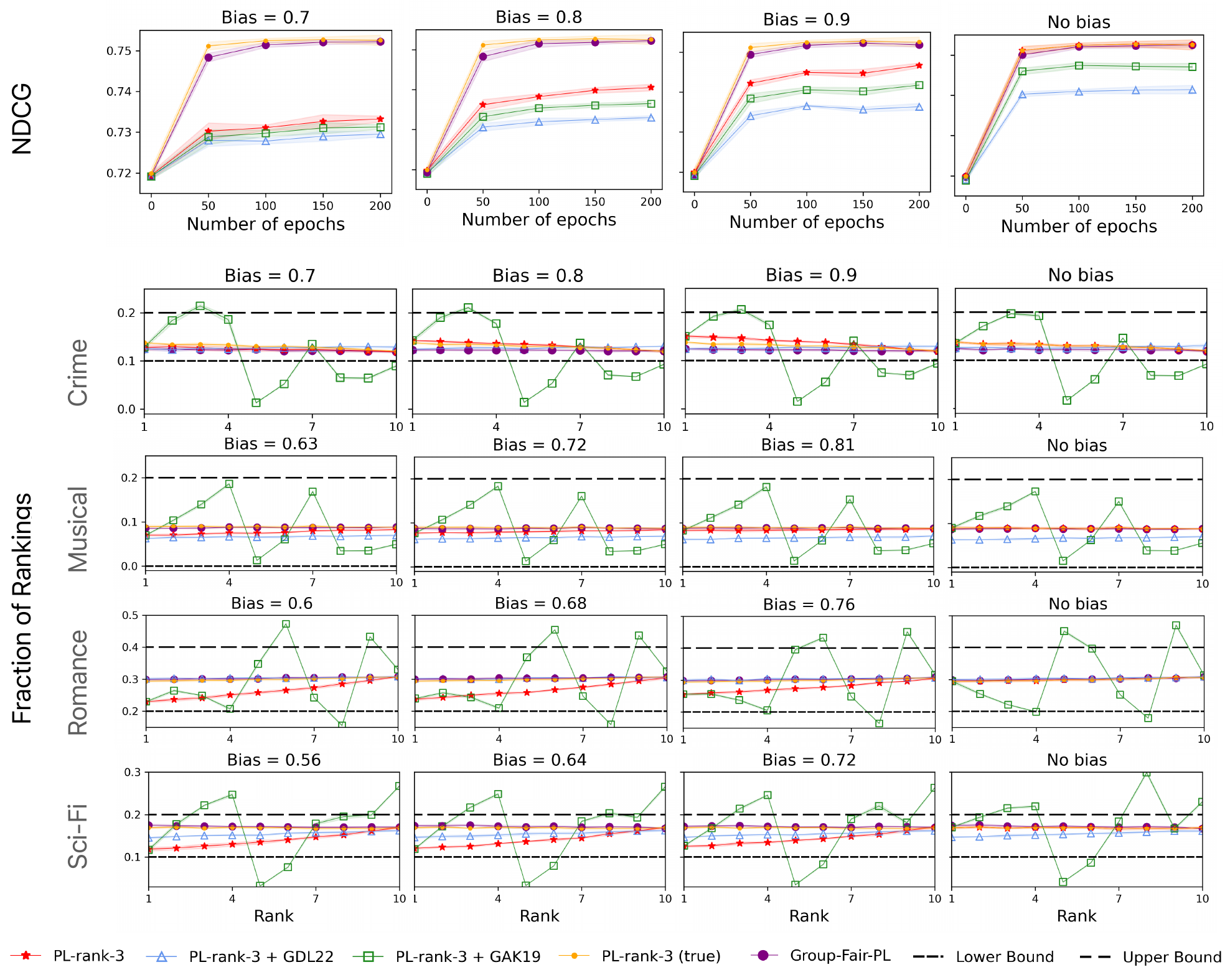}

    \caption{Results on the MovieLens dataset.}
    \label{fig:movielens}
\end{figure*}

\paragraph{Hyperparameters.}
We use a two-layered neural network of $32$ hidden units each to predict relevance scores. We use stochastic gradient descent with a learning rate of $0.001$ and batch size $512$ to optimize our relevance metric.
We report aggregate results for $10$ runs of each algorithm. We selected other hyperparameters after searching for $\delta$ in the range $0.01$ to $0.1$, $M$ in the range $10$ to $100$, and $k$ in the range $10$ to $30$. We chose the final values to be the smallest in the range where implicit bias had a significant impact on the output.

\paragraph{Implementation.}  The unconstrained PL model was trained using PL-Rank-3 algorithm from \cite{oosterhuis2022learning}.
All the experiments were run on an Intel(R) Xeon(R) Silver 4110 CPU (8 cores, 2.1 GHz clock speed, and 128GB DRAM). For reproducibility, our data and implementation of Group-Fair-PL will be uploaded at \url{github.com/sruthigorantla/Group-Fair-PL}. 
\begin{figure*}
    \centering
    \includegraphics[width=0.9\textwidth]{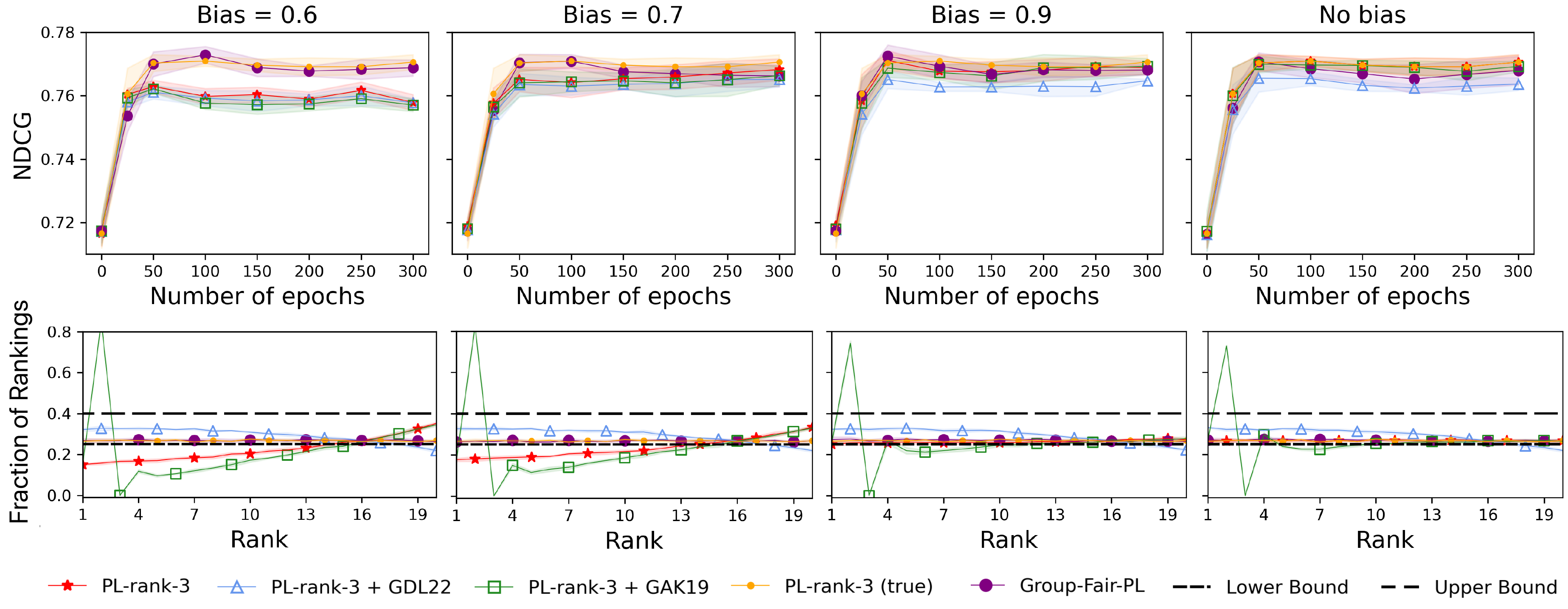}
    
    \caption{Results on the German Credit dataset.}
    \label{fig:german}
\end{figure*}

\begin{figure*}
    \centering
    \includegraphics[width=0.9\textwidth]{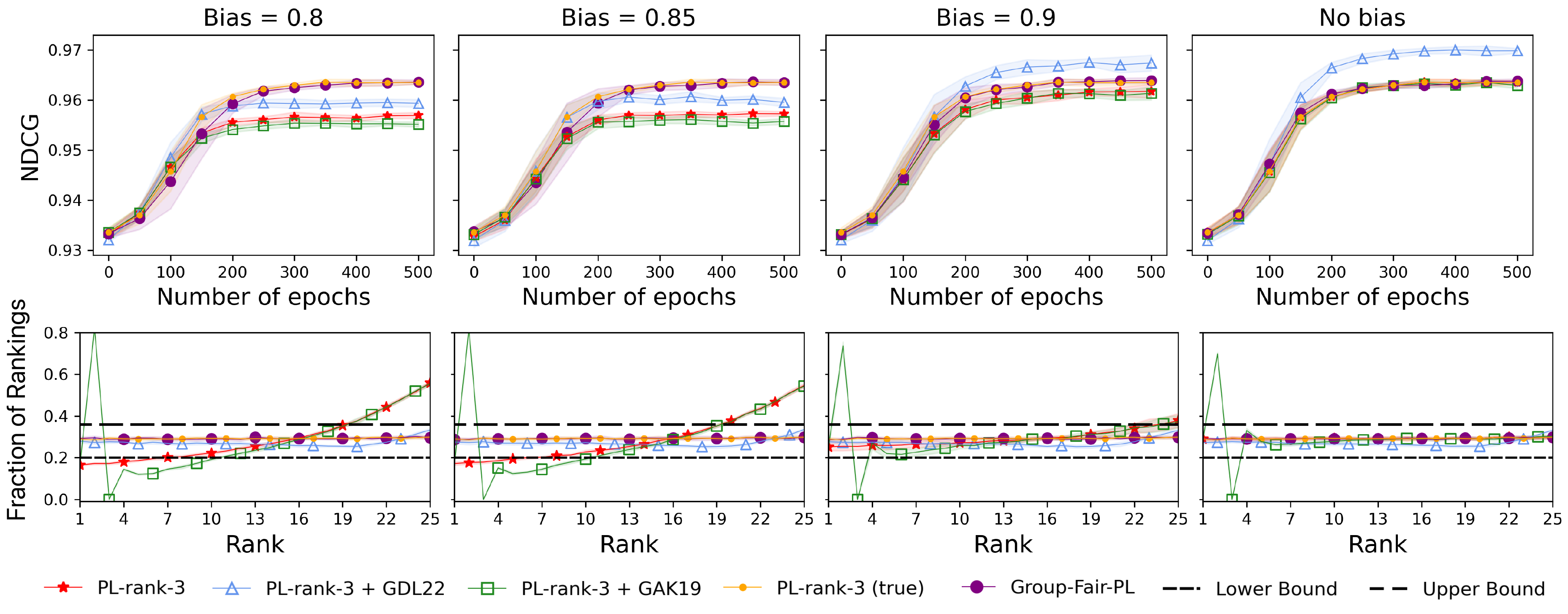}
    
    \caption{Results on the HMDA (AK) dataset.}
    \label{fig:hmda}
\end{figure*}

\begin{figure*}
    \centering
    \includegraphics[width=0.9\textwidth]{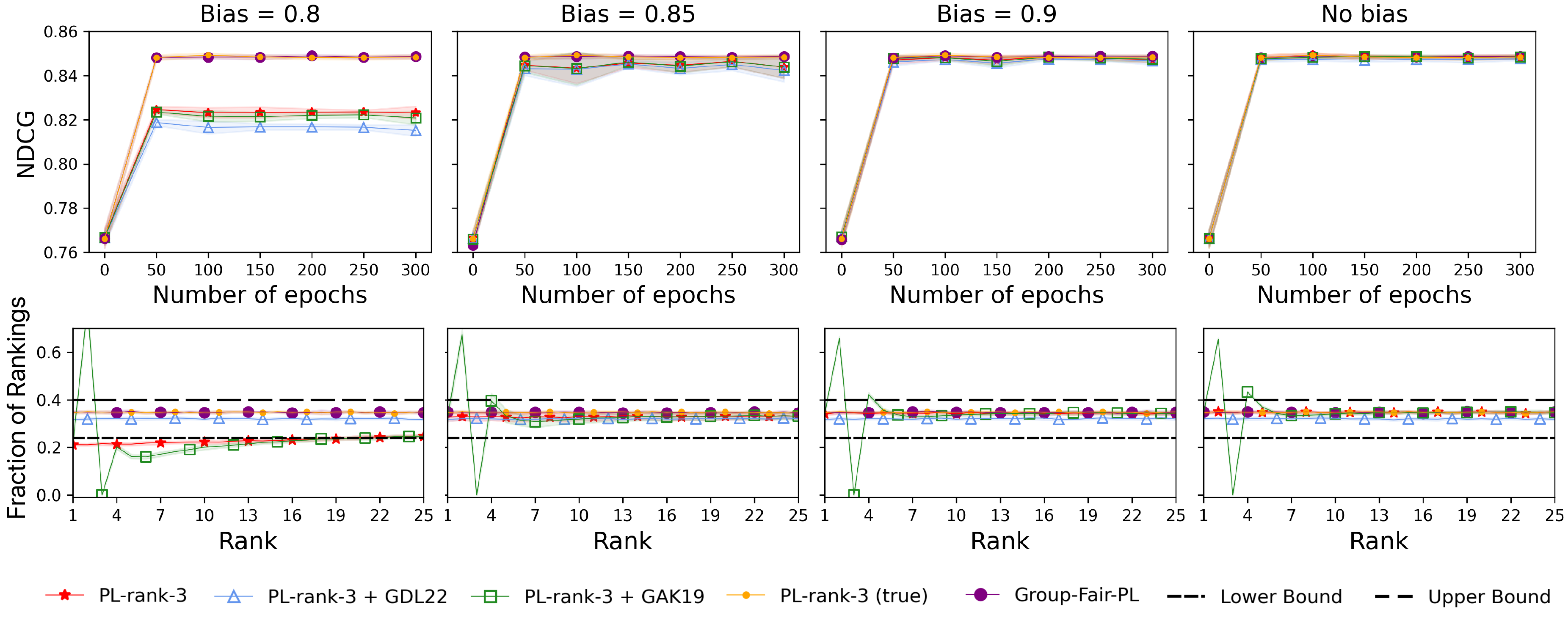}
    
    \caption{Results on the HMDA (CT) dataset.}
    \label{fig:hmda_CT}
\end{figure*}
\paragraph{Datasets.} We perform experiments on datasets, for which several past works have raised fairness concerns and demonstrated the performance of their fair ranking algorithms \cite{singh2019policy, yadav2021policy, oosterhuis2022learning,cooper2023variance}.
The \textbf{German Credit} dataset encodes users' creditworthiness as a $0/1$ label \cite{misc_statlog_(german_credit_data)_144}. To put this data into query-document pairs, we followed preprocessing similar to \cite{singh2019policy}. 
% We created $500$ queries, each of which consist of $25$ randomly selected users.
The \textbf{MovieLens} dataset consists of user ratings of movies from the movielens.org website \cite{harper2016movielens}. 
We first performed a singular value decomposition to generate $50$-dimensional features. We then chose the largest $5$ genres (see \Cref{tab:datasets}) and kept users that rated at least $50$ movies. 
The \textbf{HMDA} dataset consists of data regarding home mortgage loans in the US \cite{ffiec2022housingdata}. 
We used the preprocessed dataset released by \cite{cooper2023variance}. 
% We filtered out rows with incomplete data, and set the relevance label to be the 'action\_taken' attribute. While the original HMDA data had 8 possible values for this attribute, we categorized these into loans that were accepted and loans that were rejected as described by \cite{cooper2023variance}, assigning a label of 1 or 0 respectively. 
The HMDA dataset is available for every year since $2007$, for all $50$ US states. 
We used the data for Alaska (\textbf{AK}) from $2017$ and created a train and test split. For a more rigorous testing of our algorithms, we also used Connecticut's (\textbf{CT}) data, using years $2013-2016$ as training data and year $2017$ as test data. We did a PCA pre-processing to reduce feature dimension to $50$ \cite{DH2004} and created query-document pairs similar to the German Credit pre-processing in \cite{singh2019policy}.
The details of the datasets are in \Cref{tab:datasets}.

\paragraph{Datasets with implicit bias.}
For each dataset, we inject multiplicative implicit bias in the relevance scores of the items from the minority group as a stress test for ranking algorithms.
In the HMDA dataset, we multiply the relevance scores of the \textit{female} candidates by $\beta$, where $\beta$ is varied between $0$ and $1$ across the columns of \Cref{fig:hmda}.
For datasets with more than two groups, such as \textit{MovieLens}, we use different values of bias for different groups. We report the bias values for all the groups other than \textit{Action} group.
This model of bias is inspired by \cite{celis2020interventions}, a practical model that gives useful insights about the correct optimization objective to consider.

\subsection{Key Observations}
\paragraph{Group-Fair-PL gets the best of both fairness and NDCG.} 

In the presence of implicit bias, Group-Fair-PL outperforms PL-Rank-3 in the NDCG computed on the true scores and achieves almost same NDCG as PL-Rank-3 (true).
Compared to just post-processing for ex-post fairness (PL-Rank-3 + GDL22 and PL-Rank-3 + GAK19), our algorithm almost always achieves better NDCG, 
This suggests that by explicitly enforcing ex-post fairness during training, we are able to overcome implicit bias via eliminating unreliable comparisons of items from different groups -- main motivation of \cite{gorantla2022sampling}.
Even when there is no bias, our Group-Fair-PL still outputs ex-post-fair rankings while not compromising on the NDCG.

% \paragraph{Group-Fair-PL helps Mitigate Implicit Bias.}
\paragraph{Group-Fair-PL preserves the fairness properties of \cite{gorantla2022sampling}.}
Theorem 3.5 of \citet{gorantla2022sampling} says that with their group assignment sampling, the probability of each group being ranked at \textit{any} of the top $k$ ranks is between the lower and the upper bound (see \Cref{fig:hmda} row 2), even though the constraints are only for the representation of the group in the whole of top $k$ ranks.
This guarantee is achieved by GDL22 post-processing anyway. But even without further post-processing, Group-Fair-PL preserves this property.
In contrast, PL-Rank-3 and PL-Rank-3 + GAK19 push the protected group items to the bottom of the top $k$ ranking in the presence of bias (see \Cref{fig:hmda} row 2), even when the true representation of the minority group is uniform across the ranks (see PL-Rank-3 (true)).

\paragraph{Running time.} The running time of the algorithms is as shown in \Cref{tab:datasets}. We note that Group-Fair-PL spent most of the time sampling the group assignments. Finding faster algorithms for sampling group assignments is an interesting open problem.

\section{Conclusion}
We propose a novel group-fair Plackett-Luce model for stochastic ranking and show how one can optimize it efficiently to guarantee high relevance along with guaranteed ex-post group-fairness instead of ex-ante fairness known in previous literature on fair learning-to-rank. We experimentally validate the fairness and relevance guarantees of our ranking models on real-world datasets. Extending our results to more stochastic ranking models in random utility theory is an important direction for future work.

\section*{Acknowledgements}
SG was supported by a Google PhD Fellowship. AL is grateful to Microsoft Research for supporting this collaboration.
\bibliographystyle{unsrtnat}
\bibliography{references}

\end{document}